\newtheorem{lemma}{Lemma}[section]
\theoremstyle{definition}
\theoremstyle{assumption}
\newtheorem{assumption}{Assumption}[section]
\titlespacing\section{0pt}{0pt plus 2pt minus 2pt}{0pt plus 2pt minus 2pt}
\titlespacing\subsection{0pt}{3pt plus 4pt minus 2pt}{0pt plus 2pt minus 2pt}
\titlespacing\subsubsection{0pt}{3pt plus 4pt minus 2pt}{0pt plus 2pt minus 2pt}
\newcommand{\todo}[1]{}
\renewcommand{\todo}[1]{{\color{red} TODO: {#1}}}
\newcommand{\incmtt}[1]{{\fontfamily{cmtt}\selectfont{#1}}} 
\definecolor{darkblue}{rgb}{0.0,0.0,0.7} 
\algnewcommand{\LineComment}[1]{\textcolor{darkblue}{\scriptsize{\incmtt{\#\ #1}}}}
\title{Untangling Dense Knots \\ by Learning Task-Relevant Keypoints}
\author{
 Jennifer Grannen*$^{1}$, Priya Sundaresan*$^{1}$, Brijen Thananjeyan$^{1}$, Jeffrey Ichnowski$^{1}$, \\
 \textbf{Ashwin Balakrishna$^{1}$, Minho Hwang$^1$, Vainavi Viswanath$^{1}$, Michael Laskey$^{2}$,} \\
 \textbf{Joseph E. Gonzalez$^{1}$, Ken Goldberg$^{1}$} \\
  \scriptsize{* equal contribution} \\
  \texttt{jenngrannen@berkeley.edu}, \texttt{priyasundaresan@berkeley.edu}
}
\def\thanks#1{\protected@xdef\@thanks{\@thanks
        \protect\footnotetext{#1}}}
\thanks{$^{1}$University of California, Berkeley. $^{2}$Toyota Research Institute. }
\begin{document}
\newcommand\smallO{
  \mathchoice
    {{\scriptstyle\mathcal{O}}}
    {{\scriptstyle\mathcal{O}}}
    {{\scriptscriptstyle\mathcal{O}}}
    {\scalebox{.6}{$\scriptscriptstyle\mathcal{O}$}}
  }


\newcommand{\brijen}[1]{\textcolor{blue}{(#1 --Brijen)}}
\newcommand{\ashwin}[1]{\textcolor{magenta}{(#1 --Ashwin)}}
\newcommand{\jeffi}[1]{{\color{blue}(#1 --JI)}}
\newcommand{\jenn}[1]{{\color{violet}(#1 --Jenn)}}

\maketitle
\vspace{-0.7cm}
\begin{abstract}
Untangling ropes, wires, and cables is a challenging task for robots due to the high-dimensional configuration space, visual homogeneity, self-occlusions, and complex dynamics. We consider dense (tight) knots that lack space between self-intersections and present an iterative approach that uses learned geometric structure in configurations. We instantiate this into an algorithm, HULK: Hierarchical Untangling from Learned Keypoints, which combines learning-based perception with a geometric planner into a policy that guides a bilateral robot to untangle knots. To evaluate the policy, we perform experiments both in a novel simulation environment modelling cables with varied knot types and textures and in a physical system using the da Vinci surgical robot. We find that HULK is able to untangle cables with dense figure-eight and overhand knots and generalize to varied textures and appearances. We compare two variants of HULK to three baselines and observe that HULK achieves 43.3\% higher success rates on a physical system compared to the next best baseline. HULK successfully untangles a cable from a dense initial configuration containing up to two overhand and figure-eight knots in 97.9\% of 378 simulation experiments with an average of 12.1 actions per trial. In physical experiments, HULK achieves 61.7\% untangling success, averaging 8.48 actions per trial. Supplementary material, code, and videos can be found at \url{https://tinyurl.com/y3a88ycu}.
\end{abstract}


\keywords{Deformable Manipulation, Computer Vision} 

\section{Introduction}
Untangling ropes, wires, cables, strings, and hoses has a wide range of applications in surgery~\cite{mayer2008system, SAVED, van2010superhuman}, manufacturing~\cite{yamakawa2007one}, and households~\cite{sanchez2018robotic}. We broadly refer to this class of 1D deformable objects as ``cables". Manipulating cables poses two primary challenges: (1) state estimation and (2) manipulation. State estimation is complicated by the high-dimensional configuration space, visual homogeneity of the cable, and self-occlusions present in knots. Manipulation is complicated by friction and tension in knots, stiffness, and the need for bilateral motions to achieve loosening and manage slack~\cite{schulman2016learning, sundaresan2020learning, lui2013tangled, nair2017combining}. Prior work on untangling cables has explored both classical~\cite{lui2013tangled, schulman2016learning} and learning-based perception methods~\cite{sundaresan2020learning} to estimate the state of loosely knotted cables, and used geometric algorithms or learning from demonstrations to perform manipulation~\cite{lui2013tangled, schulman2016learning, sundaresan2020learning, kurutach2018model}. However, to infer cable state, these methods often rely on accurate segmentation, which is challenging in densely knotted configurations that lack space between crossings of adjacent segments~\cite{wang2016tying}. To address these challenges, we develop a learning based algorithm which leverages recent advances in deep learning for perception to learn task relevant keypoints.

We present 2 algorithms. The first, Basic Reduction of Under-Crossing Entanglements (BRUCE), assumes a known graph structure for knotted cables and sequentially undoes one crossing at a time until the cable has no self-intersections using two manipulation primitives from knot theory~\cite{lui2013tangled}.
Using BRUCE as an algorithmic supervisor with full state information in simulation, the learning-based algorithm, HULK: Hierarchical Untangling from Learned Keypoints, builds on recent work in learned keypoint detection~\cite{papandreou2017towards} to use visual input to directly locate pin and pull points in densely knotted cables. While prior work in deformable object tracking estimates the full cable state (e.g., through dense object descriptors~\cite{sundaresan2020learning,florence2018dense,schmidt2016self}), full state estimation
is unnecessary for tasks such as untangling.

\begin{figure}[!htbp]
    \vspace{-0.2cm}
    \centering
    \includegraphics[width=0.98\linewidth]{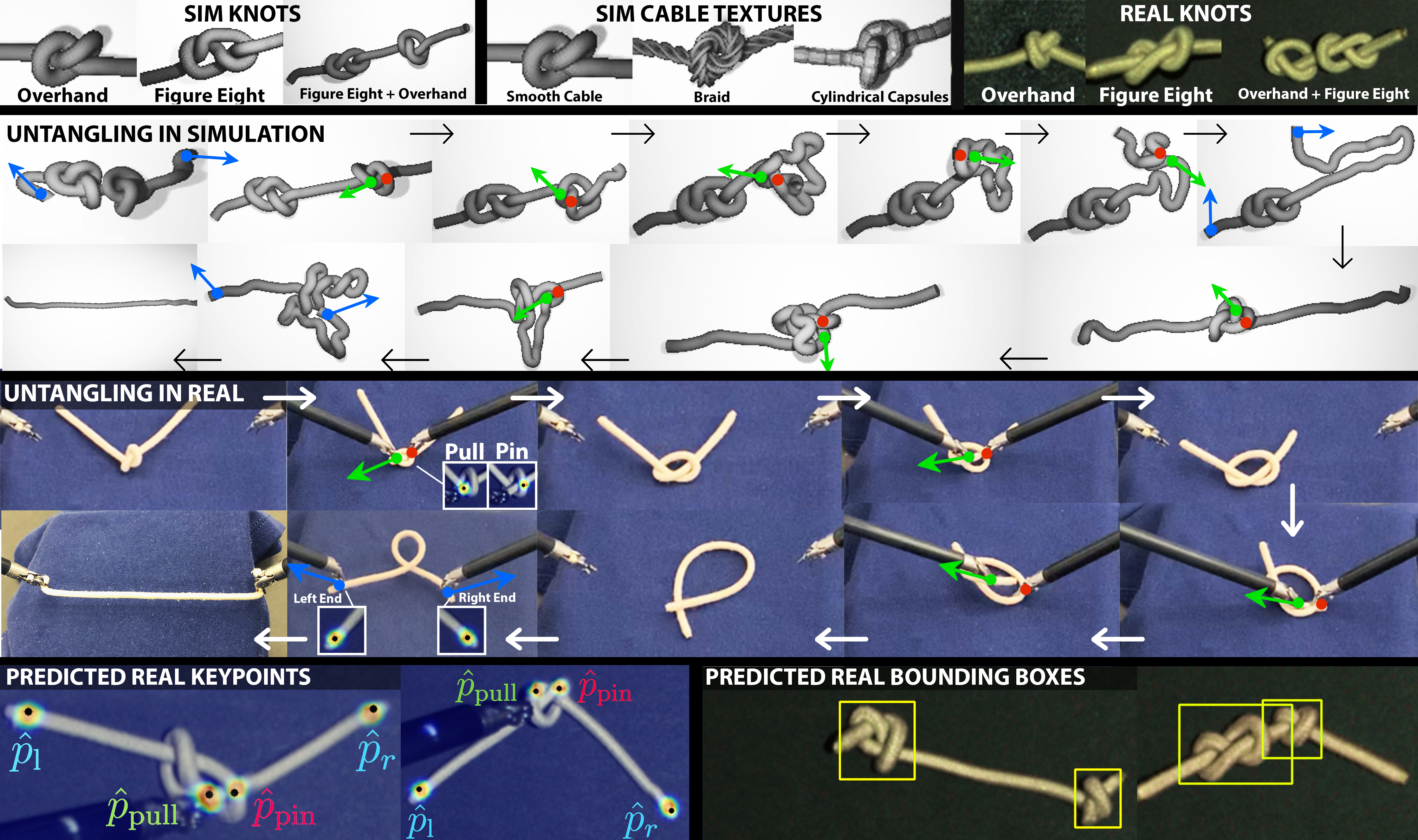}
    \caption{\textbf{HULK} learns to identify pin (red) and pull (green) keypoints from images to untangle dense figure-eight and overhand knots with generalization to various cable textures in simulation (first row). The second panel shows a sequence of untangling actions on a simulated cable with a figure-eight knot and an overhand knot. The third panel shows real actions taken by a da Vinci Research Kit robot to undo a dense overhand knot in an elastic hair tie cable. Straightening (Reidemeister) pick-place moves, shown in blue, pull the cable taut to remove occlusions that are not part of a knot, and crossing reduction (Node Deletion) moves (red pin and green pull) sequentially eliminate self-intersections.}
    \label{fig:simulator_fig}
    \vspace{-0.3cm} 
\end{figure}

This paper makes four contributions: (1) an open-access simulator in Blender for modelling the dynamics of cables; (2) BRUCE (Basic Reduction of Under-Crossing Entanglements), a geometric algorithm for untangling knots, (3) HULK (Hierarchical Untangling from Learned Keypoints): a learning-based algorithm for untangling densely knotted cables from visual input, and (4) simulation and physical experiments comparing HULK with several baseline policies with alternative perception pipelines or access to ground truth. These experiments suggest that HULK can achieve untangling with fewer actions than baselines that have access to more state information, and that HULK can generalize to cables with novel visual appearance and knot configurations.

\section{Related Work}

Manipulation of cables using analytic \cite{lui2013tangled, schulman2016learning, lee2015learning, she2019cable}, learned \cite{yan2020learning, nair2017combining}, and hybrid \cite{sundaresan2020learning, wang2019learning} approaches has received considerable recent interest, but to the best of our knowledge, handling dense knots remains an open problem. Effective cable manipulation also often requires some degree of state estimation from images, such as through nonrigid registration~\cite{chui2003new} or deep-learning approaches for global object correspondence~\cite{florence2018dense, schmidt2016self, sundaresan2020learning, kulkarni2019canonical}. We build on the work of~\citet{lui2013tangled}, which achieves 76.9\% success on physical experiments of untangling loosely knotted cables. In their pioneering 2013 paper, the authors represent a cable as a graph approximated using RGB-D perception and feature engineering~\cite{lui2013tangled}. 
They plan untangling actions by manipulating nodes and edges to minimize crossings in the graph, taking into account heuristics such as measurements of slack and empty space. In contrast, we present a geometrically motivated algorithm that circumvents full state estimation coupled with a learning-based policy which generalizes to densely knotted cables. 

Several works have also demonstrated robot knot tying and cable arrangement given human demonstrations. \citet{sundaresan2020learning} apply a geometric approach to both tasks by learning a dense object descriptor mapping~\cite{florence2018dense, schmidt2016self} in simulation, enabling deformable correspondence and tracking. This method learns global correspondences, and provides a similar ordering of pixels to a graph structure. However, when cable configurations are complex, robust state estimation becomes challenging due to inadequate cable segmentation and overlapping segments (Fig.~\ref{fig:perception_fig}~B). In contrast, HULK focuses only on task-relevant keypoints, enabling fine visuomotor control. Other methods present algorithms for knot tying and cable manipulation but do not leverage geometric structure~\cite{nair2017combining, schulman2016learning, SAVED}, whereas we reason about the geometry of intersections to infer appropriate untangling actions. We also build on recent approaches to cable manipulation that decouple perception from control~\cite{sundaresan2020learning, wang2019learning, yan2020learning}. While these approaches are sample efficient and modular, they attempt to learn intermediate or explicit state representations unlike HULK, which estimates task specific keypoints.

Several researchers have focused on a larger class of 1D and 2D deformable objects such as fabrics and clothing. Deformable object manipulation using classical perception methods~\cite{lui2013tangled, chi2019occlusion,wakamatsu2006knotting} has proven effective, but requires hand-engineered features specific to the experimental setup and can fail to handle highly deformed configurations. To mitigate the perception and modelling challenges associated with deformable objects, several works have employed deep learning-based methods including reinforcement learning~\cite{thananjeyan2017multilateral, wu2019learning, matas2018sim}, imitation learning~\cite{pathak2018zero,seita2019deep, zhang2018deep}, video prediction~\cite{hoque2020visuospatial}, and other approaches~\cite{sundaresan2020learning,ganapathi2020learning,nair2017combining,schulman2016learning,lee2015learning}. However, accurate reward engineering and loss function specification remains challenging in these methods. End-to-end learning also can result in black-box policies that abstract away visual reasoning and lack interpretability. Instead, we separate visual reasoning from manipulation to learn task-specific geometry and generate visually interpretable manipulation plans. Similar strategies decoupling perception and manipulation have proven effective in a range of robotic manipulation tasks~\cite{sundaresan2020learning, ganapathi2020learning,  danielczuk2020x, florence2019self, zeng2018robotic}. 


\def\one{\mbox{1\hspace{-4.25pt}\fontsize{12}{14.4}\selectfont\textrm{1}}}
\section{Problem Statement}
\label{sec:ps}

Given an RGB image of a cable in a densely knotted initial configuration, a bilateral robot attempts to manipulate it with a sequence of pin and pull actions to achieve a fully untangled state with no crossings within a fixed number of steps. 
\vspace{-0.3cm}
\paragraph{Assumptions}
We define the manipulation workspace with a standard (x,y,z) coordinate frame and image observations $I \in \mathbb{R}^{1200 \times 1900 \times 3}$. We assume the cable is fully visible and can be distinguished from the workspace background with color thresholding. We assume that a bilateral robot with two grippers can pin and pull at cable points within the workspace. We define two points, $w_r$ and $w_l$, on the right and left ends of the untangling workspace to be used in action planning.
As shown in Fig.~\ref{fig:problem_statement_fig}, we make the following assumptions on the starting configuration of the cable:
(1) \emph{semi-planarity}: each crossing has at most two cable segments (four edges),
(2) \emph{knot structure}: overhand or figure-eight knots, and
(3) \emph{visible endpoints}: both endpoints of the cable are visible and we distinguish between the two as left and right. We assign the endpoints with the smaller and larger x-values to be the left and right endpoints, respectively, breaking ties arbitrarily.
We define a dense cable configuration to be one that lacks space between adjacent cable segments connecting any two occlusions. 

For BRUCE, we initially formulate the problem using an undirected graph $G = (V, E)$ to model the cable configuration~\cite{lui2013tangled}; HULK, however, only takes an image as input and does \emph{not} attempt to explicitly reconstruct $G$ or estimate its parameters. 

As illustrated in Fig.~\ref{fig:problem_statement_fig}, a linear graph can model any cable configuration. Each vertex $v \in V$ is an intersection (node) or endpoint; intermediate points between nodes and endpoints are not included in the graph since they do not participate in crossings. Each edge $e \in E$ is defined as a 2-vertex tuple $e = (u,v), u \in V, v \in V$. Under the semi-planar assumption, every vertex has degree 1 or 4, $|V| = N+2$, and $|E| = 2N + 1$, where $N$ is the number of nodes. We denote the leftmost degree-one vertex $v_{l}$, and the rightmost degree-one vertex (endpoint) $v_{r}$ breaking ties arbitrarily. It is possible to have multiple edges between vertices such as between nodes $2$ and $3$ in Fig.~\ref{fig:problem_statement_fig}. For every vertex $v$, we annotate the incident edges $e = (v, v’) \in E$ with $+1$ or $-1$ as follows (Fig.~\ref{fig:problem_statement_fig}). 
\begin{align}
\label{eq:annotations}
X(v, e) = 
\begin{cases} 
+1 & \text{if vertex $v$ is an endpoint or if $e$ crosses above the other edge at vertex $v$} \\ -1 & \text{otherwise} 
\end{cases}
\end{align}
When $|V| = 2$ the cable is in the desired state with two endpoints exposed and no intersections. 

\begin{figure}[!htbp]
    \centering
    \vspace{-0.3cm}
    \includegraphics[width=1.0\linewidth]{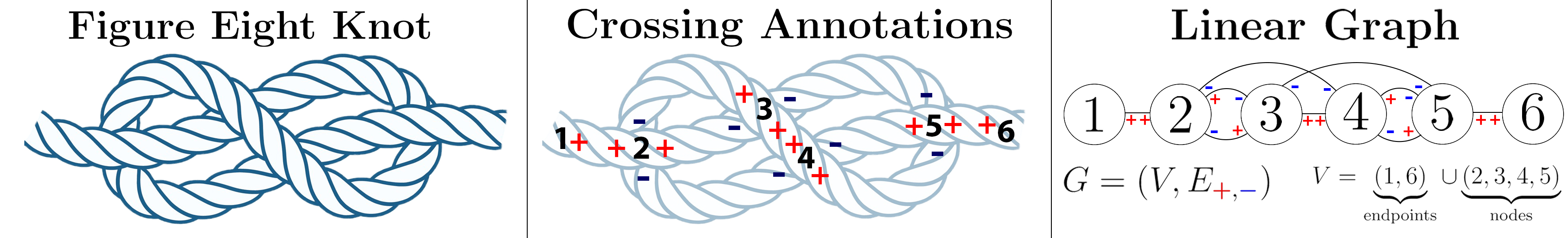}
    \caption{\textbf{Cable Graph Representation}: We consider the task of untangling cables with dense knots and varied textures. Each type of knot is parameterized by the locations of nodes (intersections) and endpoints and can be represented as a linear graph as in \citet{lui2013tangled} annotated with (+1, -1) according to Section \ref{sec:ps}.}
    \label{fig:problem_statement_fig}
    \vspace{-0.4cm}
\end{figure}

At each time $t$, we consider two simultaneously executed actions, one for each gripper: pin and pull, both with 4 variables in the global workspace coordinate frame:
\[ \mathbf{a_{t,l}} = (x_{t,l}, y_{t,l}, \Delta x_{t,l}, \Delta y_{t,l}) \hspace{2mm}\vert \hspace{2mm} \mathbf{a_{t,r}} = (x_{t,r}, y_{t,r}, \Delta x_{t,r}, \Delta y_{t,r})\]

As shown in Figure~\ref{fig:untangling_fig}, the right gripper performs a pin action $\mathbf{a_{t,l}}$ by grasping the cable at $(x_{t,l}, y_{t,l})$. The left gripper performs a pull action $\mathbf{a_{t,r}}$ by grasping the cable at $(x_{t,r}, y_{t,r})$, lifting by a fixed offset, and travelling by $(\Delta x_{t,r}, \Delta y_{t,r})$ before releasing. 

\section{BRUCE: Basic Reduction of Under-Crossing Entanglements}
\label{sec:alg}
BRUCE: Basic Reduction of Under-Crossing Entanglements is an algorithm for untangling from a semi-planar starting configuration, given the graph structure from Section~\ref{sec:ps}.

As in \citet{lui2013tangled}, we use pinning and pulling actions to execute two types of manipulation primitives: \textbf{Reidemeister moves} and \textbf{Node Deletion moves}. Reidemeister moves remove occlusions that are not part of a knot by pulling each end of the cable to opposite sides of the workspace at predefined points $w_r$ and $w_l$. Node Deletion moves remove a node in the graphical abstraction of the cable by pulling the endpoint corresponding to the incident edge labelled $-1$ from one side of the crossing to the other while pinning down the other incident edge labelled $+1$ to prevent the rest of the configuration from shifting. This action removes one node while leaving the other nodes in the cable's configuration unchanged, reducing $|V|$ by one and $|E|$ by two. The termination condition is when the cable's configuration contains no crossings, corresponding to a graph representation with $|V| = 2$ for the endpoints.

BRUCE iteratively undoes crossings starting from the right endpoint until no self occlusions remain. It starts by performing a Reidemeister move to remove crossings separate from the relevant knots and disambiguate the configuration. Next, it identifies the next \textbf{under-crossing}, $c$, that must be undone by selecting the first vertex adjacent to two successive $-$ annotated edges. It then performs one Node Deletion move on $c$ followed by a Reidemeister move. It repeats these two steps until there are no occlusions left in the cable. This procedure is illustrated in Fig.~\ref{fig:untangling_fig}.

\begin{figure}[!thbp]
    \centering
    \vspace{-0.2cm}
    \includegraphics[width=1.0\linewidth]{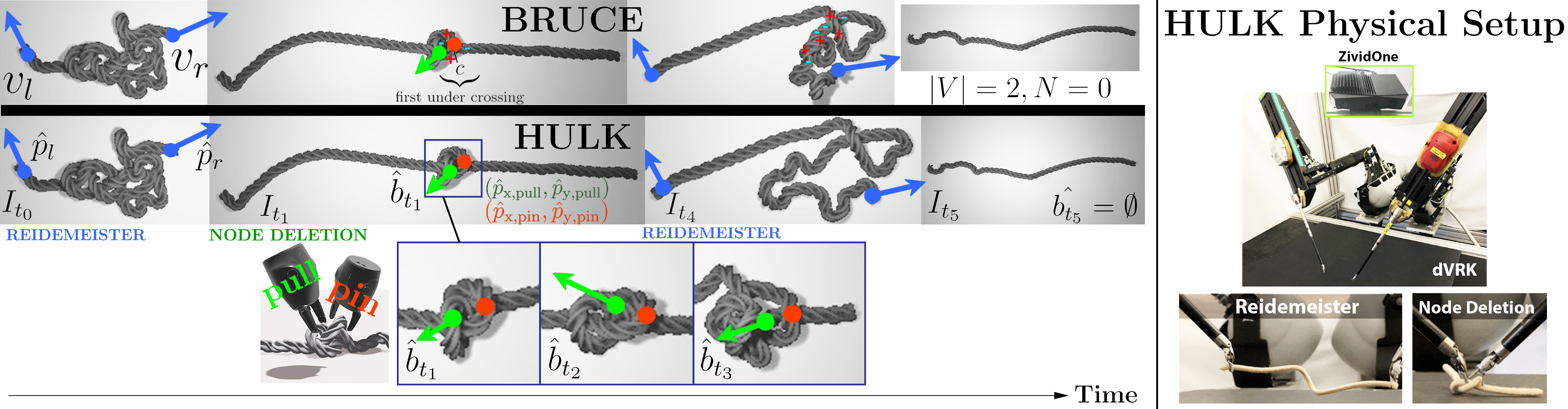}
    \caption{\textbf{BRUCE/HULK Untangling Algorithm Comparison} (left, from top to bottom): Given a semi-planar, densely knotted cable graph, BRUCE performs a Reidemeister move with endpoint vertices $v_l$ and $v_r$. Next, BRUCE locates the first under-crossing $c$ relative to $v_r$. While pinning the cable in place, a pull action is performed in a Node Deletion move until $c$ is removed. A final Reidemeister move is taken to restore the cable to a straightened configuration. Starting from time $t_0$, HULK follows a similar procedure, identifying keypoints $\hat{p}_l$, $\hat{p}_r$, $\hat{p}_{\mathrm{pin}}$, and $\hat{p}_\mathrm{{pull}}$ from image observations $I_{t_i}$ to perform Reidemeister and Node Deletion moves. HULK performs these moves in sequence for as long as a knot bounding box $\hat{b}_{t_i}$ is detected in the current observation. In HULK, slack management requires multiple pinning and pulling actions to be performed for each Node Deletion move, as illustrated in the third row of bounding box images. We implement the manipulation primitives for HULK on a da Vinci Research Kit robot \cite{chen2017software} with a mounted ZividOne overhead RGBD camera (right).} 
    \label{fig:untangling_fig}\vspace{-0.2cm}
\end{figure}


If each operation is executed correctly on the graphical representation, BRUCE is guaranteed to untangle any semi-planar configuration. 
Each iteration of BRUCE 
performs 
Node Deletion and Reidemeister operations that monotonically reduce the number of crossings until no crossings remain.

\section{HULK: Hierarchical Untangling from Learned Keypoints}
\label{sec:implementation}
HULK does not assume the graph structure is given and learns task-relevant features to infer actions to perform BRUCE's manipulation primitives on a loosely or densely-knotted cable.

\subsection{Simulation}
We develop a simulation environment for scalable training and evaluation of the learned policy subject to knot type and cable appearance variations. We use Blender 2.8~\cite{lallemand1998blender}, a graphics and animation suite that supports physics simulations, to create a cable simulation environment. Compared with other deformable simulators (e.g., Mujoco~\cite{todorov2012mujoco} and PyBullet~\cite{coumans2016pybullet}), Blender provides greater flexibility over the appearance of the cable for domain randomization. We construct a mass-spring model of a cable consisting of 50 rigid-body cylindrical meshes linked by springs following prior work~\cite{seita2019deep, yan2020self}. The simulator supports two textures for the cable model, (1) a smooth texture consistent with hoses, cables, and tubing; and (2) a braided texture modelled after twine and nylon rope (Fig.~\ref{fig:simulator_fig}). We hand-code one knot-tying trajectory per knot type, and add random noise to each trajectory to generate initially perturbed dense configurations. 

\subsection{Dataset Generation}

\textbf{Synthetic Data}: To generate synthetic training data, we produce a variety of dense initial configurations by adding random noise into predefined knot-tying trajectories. We use the ordered set of cylinder positions, queried from Blender's Python API, and ray tracing to detect crossings and infer the graph. Given the reconstructed graphs, we execute BRUCE and record overhead synthetic RGB renderings and ground-truth annotations with which to train HULK.
\\\textbf{Real Data}: HULK is learned entirely from real data for deployment on a physical robot. We hand-label a set of real cable images with crossing, endpoint, and bounding box annotations, and use data augmentation techniques discussed in Appendix \ref{sec:experimental_details} to amplify the dataset size by 20X before training.


\subsection{HULK Implementation}
HULK uses learned keypoints to model only the task-specific portion of the cable. The algorithm is partitioned across: (1) planning actions to execute Reidemeister and Node Deletion moves and (2) perceiving the necessary geometric features for these actions. 
We introduce two versions of HULK that differ only in perception: (1) \textbf{HULK-G}, which operates \emph{exclusively} on the \textbf{global} image of a cable, and (2) \textbf{HULK-L}, which operates on \textbf{local} knot crops of a cable \emph{in addition} to the global image. Both variants employ hierarchical manipulation to sequentially undo crossings.

\subsubsection{Manipulation}
\label{sec:manipulation}
HULK also uses Node Deletion moves and Reidemeister moves, that are planned directly from bounding boxes around each knot in the configuration and predicted keypoints $\hat{p}_r, \hat{p}_l,\hat{p}_\mathrm{pull}, \hat{p}_\mathrm{pin}$. The keypoints, $p_l$, $p_r$, $p_\mathrm{pull}$, and $p_\mathrm{pin}$, indicate the pixel locations of the left and right cable endpoints $v_l$ and $v_r$, and the pull and pin grasps for the next planned Node Deletion move on the first under-crossing relative to $v_r$, denoted $c$ (Fig.~\ref{fig:untangling_fig}).
 
In a Node Deletion move (Alg.~\ref{alg:hulk_alg}, Ln.~\ref{hulk_undo}) at time $t$, the left arm grasps at $\hat{p}_\mathrm{pull}$ and pulls in the direction of the action vector, $\hat{p}_\mathrm{pull} - \hat{p}_\mathrm{pin}$, and the right arm grasps at  $\hat{p}_\mathrm{pin}$ to pin the cable in place:
\[\mathbf{a_{t,l}} =  ( \hat{p}_{x,\text{pull}}, \hat{p}_{y,\text{pull}},  \hat{p}_{x,\text{pull}} - \hat{p}_{x,\text{pin}}, \hat{p}_{y,\text{pull}} - \hat{p}_{y,\text{pin}}) \hspace{2mm}\vert \hspace{2mm} \mathbf{a_{t,r}} = ( \hat{p}_{x,\text{pin}}, \hat{p}_{y,\text{pin}}, 0, 0)\] 

In a Reidemeister Move (Alg.~\ref{alg:hulk_alg}, Lines \ref{hulk_init_reid} and  \ref{hulk_intermediate_reid}), two consecutive actions pull $\hat{p}_l$ to a predefined point $w_l$ at one end of the workspace, and $\hat{p}_r$ to a predefined point $w_r$ at the opposite end. 
\[\mathbf{a_{t,l}} = ( \hat{p}_{x,l}, \hat{p}_{y,l}, w_{x,l} - \hat{p}_{x,l}, w_{y,l} - \hat{p}_{y,l} ) \hspace{2mm}\vert \hspace{2mm} \mathbf{a_{t,r}} = ( \hat{p}_{x,r}, \hat{p}_{y,r}, w_{x,r} - \hat{p}_{x,r}, w_{y,r} - \hat{p}_{y,r})\] 

The untangling termination condition for BRUCE (Alg.~\ref{alg:untangle_alg}), $|V| > 2$, refers to a configuration free of intersections. For practical implementation, HULK slightly relaxes this constraint to allow intersections that do not form a knot, as taking a  Reidemeister move (Alg.~\ref{alg:hulk_alg}, Ln.~\ref{hulk_intermediate_reid}) will remove any crossings that are not part of a knot. We also want to immediately detect when an endpoint is freed from an under-crossing. However, the bounding box for a knot loosened to this extent is undefined. Thus, we define a condition to approximate when an action pulls the desired endpoint beyond $\hat{p}_\mathrm{pin}$, such that the inner product between the Node Deletion action vector and the vector between $\hat{p}_r$ and $\hat{p}_\mathrm{pin}$ is above a threshold $\lambda$. In experiments, we set $\lambda$ to 0.7 to favor false negatives over positives, as early termination is a greater risk to untangling than late termination. The final termination condition is summarized by Eq.~\ref{eq:termination}, where we additionally impose a hard limit $T = 30$ on the number of actions:
 \begin{equation}
 \label{eq:termination}
      \underbrace{g(I) = \varnothing}_{\text{no knot detected}} \text{ OR } \underbrace{\langle \hat{p}_r - \hat{p}_\mathrm{pin}, \hat{p}_\mathrm{pin} - \hat{p}_\mathrm{pull} \rangle > \lambda}_{\text{endpoint freed from under-crossing}} \text { OR } \underbrace{t > T}_{\text{\# actions exceeded}}.
 \end{equation}

We present a comparison of BRUCE and HULK here, and discuss HULK's perception system below.

\begin{figure}[h!]
\vspace{-12pt}
\MakeRobust{\Call}%
\begin{minipage}[t]{0.45\textwidth}
\begin{algorithm}[H]
\caption{BRUCE}
\label{alg:untangle_alg} 
\begin{algorithmic}[1]
\State \textbf{Input:} Graph $G = (V, E)$ of cable

\State Reidemeister move with $v_r$, $v_l$ 
\While{$|V|>2$}
\State Find $c$ by traversal ($v_r \longrightarrow v_l$)
\State Node Deletion move on $c$
\State Reidemeister move with $v_r, v_l$
\EndWhile
\State \textbf{return} DONE

\end{algorithmic}
\end{algorithm}
\end{minipage}
\begin{minipage}[t]{0.48\textwidth}
\begin{algorithm}[H]
\caption{HULK}
\begin{algorithmic}[1]
\State \textbf{Input:} RGB image of cable

\State Initial Reidemeister move with $\hat{p}_r$, $\hat{p}_l$ \label{hulk_init_reid}
\While{NOT Eq. \ref{eq:termination}} 
\State Directly predict $\hat{p}_\text{pull}$, $\hat{p}_\text{pin}$  \label{hulk_find_knot}
\State Node Deletion move with $\hat{p}_\text{pull}$, $\hat{p}_\text{pin}$\label{hulk_undo}
\State Reidemeister move with $\hat{p}_r$, $\hat{p}_l$ \label{hulk_intermediate_reid}
\EndWhile
\State \textbf{return} DONE
\end{algorithmic}
\label{alg:hulk_alg} 
\end{algorithm}
\end{minipage}
\vspace{-6pt}
\end{figure}

\subsubsection{Perception}
HULK employs perception-based learning to robustly infer the task-relevant components of the cable as defined in Section~\ref{sec:manipulation}. 
HULK-G infers all keypoints $p_l$, $p_r$, $p_\mathrm{pull}$, and $p_\mathrm{pin}$ from a full-resolution RGB image of a cable. HULK-L also infers $p_l$, $p_r$ globally, but additionally uses local information to estimate $p_\mathrm{pull}$, and $p_\mathrm{pin}$ directly from a bounding box crop of the right-most knot.

\textbf{Knot Detection}: HULK uses a bounding box knot-detector to instantiate a termination condition for the task. HULK-L additionally uses this module to predict $\hat{p}_\mathrm{pull}$, and $\hat{p}_\mathrm{pin}$ local to knot crops instead of the global image as in HULK-G. We learn a function $g: \mathbb{R}^{640 \times 480 \times 3} \mapsto \mathbb{R}^{4 \times N}$ that maps a full-resolution RGB image to bounding boxes $(x_{\text{min}}, y_{\text{min}}, x_{\text{max}}, y_{\text{max}})_i|_{i=1,\hdots N}$ for $N$ knots contained in the image using a Mask-RCNN framework as in \citet{he2017mask} (Fig.~\ref{fig:perception_fig}). 

\textbf{Keypoint Regression}: HULK-L and HULK-G implement separate keypoint regression modules. The distinction is that HULK-L attempts to take advantage of local information by explicitly using predicted knot bounding boxes. HULK-G learns a function $f_\mathrm{all}$ that globally regresses all 4 keypoints as 2D Gaussian heatmaps (Fig. \ref{fig:perception_fig}). Provided a global image of the cable, $I \in \mathbb{R}^{640 \times 480 \times 3}$, HULK-G computes $\hat{p}_l, \hat{p}_r, \hat{p}_\mathrm{pull}, \hat{p}_\mathrm{pin} = f_\mathrm{all}[I]$.

HULK-L, based on on \citet{papandreou2017towards} consists of one network $f_\mathrm{reid}$ which infers $p_r,p_l$  from the global image and $f_\mathrm{node}$ which infers $p_\mathrm{pull}, p_\mathrm{pin}$ from a local predicted bounding box knot crop of size (80,60) (Fig.~\ref{fig:perception_fig}~D). HULK-L computes $\hat{p}_l, \hat{p}_r = f_\mathrm{reid}[I]$, and predicts the rightmost bounding box $\hat{b}$ from g(I) (Fig.~\ref{fig:perception_fig}), consistent with the assumption that we always untangle by finding the first under-crossing relative to the right endpoint. Finally, HULK-L finds $\hat{p}_\mathrm{pull}, \hat{p}_\mathrm{pin} = f_\mathrm{node}[I[\hat{b'}]]$ locally within $\hat{b'}$, where $\hat{b'}$ is $\hat{b}$ resized to aspect ratio (80,60). Additional details about both HULK-G and HULK-L are provided in Appendix \ref{sec:learning_details}. 

\begin{figure}[!thbp]
    \centering
    \vspace{-0.2cm}
    \includegraphics[width=0.98\linewidth]{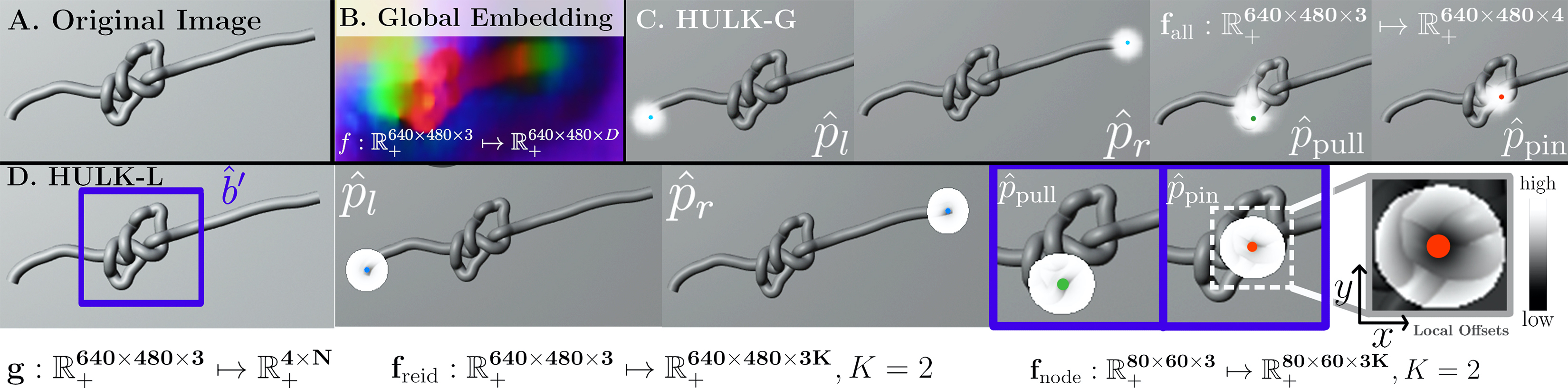}
    \caption{\textbf{Perception Overview}: We compare two implementations of HULK's perception (C, D) against full state estimation approaches on an RGB image of a simulated cable (A). We train a dense object descriptor network~\cite{sundaresan2020learning, florence2018dense, schmidt2016self} that learns a dense mapping to a unit-normalized 3D descriptor space visualized in RGB (B). The mapping fails to discriminate between overlapping segments, shown by the lack of color distinction around the knot, preventing reliable inference of task-relevant keypoints. HULK-G (C) outputs 4 Gaussians centered around $p_l, p_r, p_\mathrm{pull}, p_\mathrm{pin}$ (left to right). HULK-L (D) learns $f_\mathrm{reid}, f_\mathrm{node}$ from global images and knot-crops respectively, which each produce 3 heatmaps per keypoint: binary classification of pixels within a radius $R$ of the keypoint, pixelwise $x$ offsets, and pixelwise $y$ offsets. The final predicted keypoint is a positively classified pixel with minimum predicted offset to the desired keypoint (Local Offset subfigure).}
    \label{fig:perception_fig}
    \vspace{-0.3cm}
\end{figure}


\section{Experiments}
\label{sec:experiments}
We evaluate HULK in simulation and physical experiments of untangling cables.
We test all policies on 3 different dense starting knot configurations (single overhand, single figure-eight, overhand in series with figure-eight). In simulation, we compare HULK-L, HULK-G, and 3 baseline policies and consider 3 different textures (cylindrical capsules, smooth, braid) as shown in Fig.~\ref{fig:simulator_fig}. In physical experiments, we compare HULK-G with 1 baseline policy.
\vspace{-0.15in}
\paragraph{Overview of Policies:}
All policies take as input an image and compute bounding boxes from RGB image inputs and use Eq.~\ref{eq:termination} as a termination condition. Each policy varies in its estimation of $p_l, p_r,  p_{\text{pull}}, p_{\text{pin}}$ which are used to implement Reidemeister and Node Deletion moves.
\begin{itemize*}[label=,afterlabel=]
    \item The \textbf{Oracle} baseline implements BRUCE with \emph{full access} to the ground-truth 3D cable state in simulation.
    \item The \textbf{Depth} baseline requires RGB-D perception and approximates $p_{\text{pin}}$ as the highest depth pixel within $\hat{b}$ and $p_{\text{pull}}$ is 15 pixels to the left of $p_{\text{pin}}$ since we untangle relative to the right endpoint.
    \item  The \textbf{Random} baseline requires both RGB and the cable binary segmentation mask, and takes arbitrary actions sampled on the cable segmentation mask within $\hat{b}$.
    \item \textbf{HULK-G} uses learned keypoint estimation (Fig.~\ref{fig:perception_fig}) to detect endpoints and pull/pin locations from full-workspace RGB images.
    \item \textbf{HULK-L} combines bounding box knot detection, endpoint and local pull/pin detection (Fig.~\ref{fig:perception_fig}) from RGB images.
\end{itemize*}

\subsection{Simulation Experiments}
In HULK-L, for each of the three textures (Fig.~\ref{fig:simulator_fig}), we learn a separate set of models
using the procedure described in Section~\ref{sec:implementation}. In HULK-G, we learn a separate bounding box knot-detection model and global keypoint regression model for endpoints and pull/pin locations for each texture. We train each network on 3500 rendered synthetic images of the appropriately textured cable in randomized initial knotted configurations. We run 21 trials in simulation for all combinations of manipulation policy, starting knot configuration, and texture. 
\begin{figure}[!htbp]
    \vspace{-0.1in}
    \centering
    \includegraphics[width=1.0\linewidth]{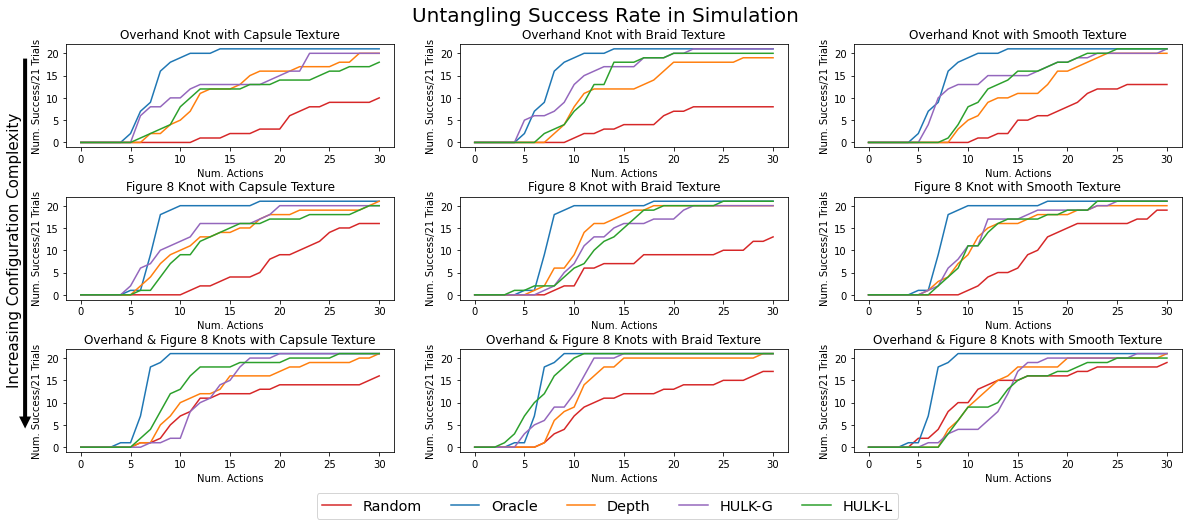}
    \caption{\textbf{Simulation Results}: Untangling success rate and efficiency over 21 trials are plotted across each texture, initial configuration, and policy. We find that the bounding box knot detector performs best with the smooth texture relative to the braid and capsule textures because the knot geometry is easiest to perceive in the smoother texture. Both the global and local keypoint detectors perform worse on the smooth texture due to the lack of features along the cable to disambiguate crossings.
    }
    \label{fig:experiments_fig}\vspace{-0.2in}
\end{figure}

Failure modes of HULK arise in separate components of the perception method enabling modular improvement approaches. In the bounding box detection module, false negatives can lead to premature termination. In HULK-L, since the pull/pin keypoint regression module is conditioned on an accurately predicted bounding box, erroneously predicted bounding box dimensions or false positives can result in a crop that either does not include the relevant under-crossing or captures too large of a crop. Failures in the keypoint module typically occur in predicting a pull/pin pair rather than endpoints, when the network identifies keypoints local to a crossing that is not the next immediate under-crossing. Other failure modes include premature or late termination caused by a mispredicted action that erroneously triggers the termination condition.

We evaluate HULK and baselines by measuring the success rate of untangling as a function of the number of actions taken (Fig.~\ref{fig:experiments_fig}), and benchmarking all engineered policies relative to Oracle. HULK-L and HULK-G outperform Random across all textures and initial knot configurations with empirically higher efficiency and higher success, as shown in the steeper convergence of both HULK implementations compared to Random in all plots in Fig.~\ref{fig:experiments_fig}. HULK-G and HULK-L match the performance of the Depth baseline in several trials, even though both HULK-L and HULK-G lack depth information, and the Depth baseline has access to the ground-truth endpoints which is critical to preventing premature termination. The Depth baseline also naively pulls directly left in all cases, which is not camera-pose agnostic, and depth provides a strong noise-free signal in simulation. HULK-L outperforms HULK-G in the 2 out of 3 multiple knot experiments as its hierarchical perception encourages the prediction of $\hat{p}_\text{pull}$ and $\hat{p}_\text{pin}$ around the correct crossing.

\subsection{Physical Experiments}
As shown in Fig.~\ref{fig:simulator_fig}, we evaluate HULK using the two 7-DOF arms of a da Vinci Research Kit (dVRK) surgical robot~\cite{chen2017software} to untie an elastic hair tie.
Empirically, the hair tie is more pliable than other cables, allowing for easier pinning and pulling, and is proportionate in scale to the dVRK end-effectors. The hair tie is pre-cut, 5~mm in diameter, 15~cm in length, and smooth in texture. We use a Zivid OnePlus RGBD camera (Figure \ref{fig:untangling_fig}) to collect 1920x1200 RGB overhead images, as the depth channel is not used in perception inference.

We train the keypoint and bounding box networks of HULK-G on 350 hand-labelled real hair tie images, augmented 20x according to Section~\ref{sec:implementation}. Details for the experimental setup, including grasp-planning, collision handling, and image-based inference are in Appendix~\ref{sec:experimental_details}. As a baseline, we implement the Random policy described above, but use color-based thresholding to approximate cable/knot segmentation masks as ground truth segmentations are unavailable in a real setting. The Random policy also approximates the cable endpoints as the extreme points on the cable mask for Reidemeister moves. We run 10 trials of each method on three starting configurations and with two tiers of difficulty: dense starting configurations and looser starting configurations, as shown in Fig.~\ref{fig:phys_exp_setup}. We define a success to be an ending configuration with no knots and at most one crossing, given the tendency of the elastic hair tie to spring back into single crossing configurations at rest. The results reported in Table~\ref{table:phys_exp_results} suggest the effectiveness of HULK-G over a Random baseline in untangling knots in a hair tie, particularly for dense knots. Most manipulation errors are due to grasp-planning and cable-driven robot dynamics, detailed in Appendix~\ref{sec:experimental_details}, leading to gripper slippage or near misses during grasping. We observe similar perception failure modes as in simulation experiments: false negatives in the bounding box detector, leading to early stopping, and mispredicted actions from poor keypoint predictions, which are less costly.

\begin{table}[!htbp]
\centering
\vspace{-0.2cm}
\resizebox{\columnwidth}{!}{
 \begin{tabular}{||c | c || c | c | c || c | c | c ||} 
 \hline
 \multicolumn{2}{||c||}{}&
 \multicolumn{3}{c||}{Random}& \multicolumn{3}{c||}{HULK-G}\\ 
 \hline
Density & Knot & Success Rate & Actions / Success & Failure Modes & Success Rate & Actions / Success & Failure Modes \\ 
\hline
\hline
Loose & O & 3/10 & 6.67 & A (1), B (0), C (4), D (2) & \textbf{7/10} & \textbf{4.71} & A (1), B (\textbf{0}), C (1), D (1) \\
\hline
Loose & F & 6/10 & 5.67 & A (0), B (1), C (3), D (0) & \textbf{7/10} & \textbf{3.14} & A (1), B (1), C (1), D (\textbf{0}) \\
\hline
\hline
Dense & O & 0/10 & N/A & A (4), B (0), C (4), D (2) & \textbf{5/10} & \textbf{6.60} & A (\textbf{0}), B (\textbf{0}), C (3), D (2) \\
\hline
Dense & F & 2/10 & 12.5 & A (1), B (1), C (5), D (1) & \textbf{7/10} & \textbf{7.57} & A (\textbf{0}), B (\textbf{0}), C (3), D (\textbf{0}) \\
\hline
Dense & O + F & 0/10 & N/A & A (4), B (2), C (4), D (0) & \textbf{5/10} & \textbf{17.8} & A (\textbf{0}), B (\textbf{0}), C (4), D (1) \\
\hline
Dense & O + O & 0/10 & N/A & A (1), B (2), C (7), D (0) & \textbf{6/10} & \textbf{14.0} & A (\textbf{0}), B (\textbf{0}), C (3), D (1) \\
\hline
\end{tabular}}
\\

\caption{\textbf{Physical Results:} We report the success rate and efficiency for untangling an elastic hair tie cable containing Overhand (O) and Figure-8 (F) knots on the dVRK.
Keypoint inference on an Nvidia GeForce RTX 2080 takes 114~ms, bounding box inference takes 315~ms, Node Deletion motions take 10~s to execute, and Reidemeister motions take 3.5~s per gripper. We observe HULK-G can generalize to Overhand + Overhand (O + O), which is not seen during training. We categorize the occurrence of 4 failure modes: (A) early stopping due to bounding box false negatives, (B) the cable leaving the reachable workspace due to consecutive mispredicted actions, (C) the cable getting caught in the end-effector jaws due to high hair tie friction and density, and (D) exceeding the maximum number of allowed actions (15 per knot). In double-knot experiments, we observe fewer mode A failures as the double-knot reduces the likelihood of false negatives, but the longer task horizon makes failure mode C more prevalent.}
\label{table:phys_exp_results}
\vspace{-0.3cm}
 \end{table}
 

\section{Conclusion}
\label{sec:conclusion}
We present BRUCE, a geometric algorithm for untying loose knots, and HULK, a learned perception-based algorithm to untie dense overhand and figure-eight knots from visual input. HULK keypoint outputs are used to plan straightening and loosening actions. We evaluate the effectiveness of HULK against 3 baselines in simulation and 1 in physical trials. Simulation experiments suggest that HULK trained with BRUCE as an algorithmic supervisor can more successfully and efficiently untangle cables consisting of varied textures and initial configurations compared to analytical approaches. Physical trials with 350 hand-labelled training examples and robot untying trials on 60 knotted cables averaging 3-18 actions per trial suggest that HULK can successfully untie dense knots in well over 50\% of trials, with most failures due to mechanical effects. In future work we will extend HULK to more complex knot configurations, cables that vary in physical properties and sizes, and employ active sensing to handle uncertain states encountered during manipulation. We hope to make the acronym HOUDINI work for the next version of the algorithm — HULK out.   


\clearpage
\footnotesize
\acknowledgments{
This research was performed at the AUTOLAB at UC Berkeley in affiliation with the Berkeley AI Research (BAIR) Lab,  the Real-Time Intelligent Secure Execution (RISE) Lab
and the CITRIS "People and Robots" (CPAR) Initiative. Any opinions, findings, and conclusions or recommendations expressed in this material are those of the author(s) and do not necessarily reflect the views of the sponsors. The authors were supported in part by donations from Toyota Research Institute, Google, and by equipment grants from Intuitive Surgical. The da Vinci Research Kit is supported by the National Science Foundation, via the National Robotics Initiative (NRI), as part of the collaborative research project "Software Framework for Research in
Semi-Autonomous Teleoperation" between The Johns Hopkins University (IIS 1637789), Worcester Polytechnic Institute (IIS 1637759), and the University of Washington (IIS 1637444). Ashwin Balakrishna is supported by an NSF GRFP. We thank our colleagues who provided helpful feedback, code, and suggestions, especially Kate Sanders, Wisdom Agboh, and Daniel Seita.}

\begin{small}
\bibliography{corl}  
\end{small}
\normalsize
\newpage
\appendix
\begin{LARGE}
\begin{center}
\textbf{Learning Robot Policies for Untangling Dense Knots in Linear Deformable Structures \\ Supplementary Material}
\end{center}
\end{LARGE}
\maketitle

The supplementary material is structured as follows: Appendix \ref{sec:sim_details} contains additional details on the cable simulator. Appendix \ref{sec:learning_details} specifies details and hyperparameters for all learning-based methods. Appendix \ref{sec:experimental_details} contains additional details on physical and simulation experiments. 

\section{Simulator Details}
\label{sec:sim_details}
While out-of-the-box models exist for both cloth and soft-body objects, the implementations in Blender and other widely used simulators (e.g. Mujoco ~\cite{todorov2012mujoco} and PyBullet ~\cite{coumans2016pybullet}) do not serve our purposes for simulating realistic physical behavior and visual appearance of a densely knotted cable. Furthermore, they do not easily allow us to collect self-supervised labels for keypoints and bounding boxes on cable features such as knots and crossings. Thus, we implement a custom simulator with a cable modeled as a set of rigid-body cylinders linked by springs. The cable is manipulated on a surface implemented as a rigid-body plane with friction in Blender 2.8. The hyperparameters of the simulation are experimentally chosen to meet the tradeoffs between suitable simulation realism and rendering times and are presented in Table~\ref{table:simulator_params}. 

\begin{table}[H]
\centering
\begin{tabular}{||m{0.25\linewidth}|| m{0.5\linewidth} || m{0.1\linewidth} ||}
 \hline
 \textbf{Simulation Parameter} & \textbf{Explanation} & \textbf{Value} \\
  \hline
   \hline
 Number of cylinders & \# of segments comprising cable & 50 \\
  \hline
Cylinder radii & radial thickness of cable & 0.25m \\
 \hline
 Cylinder lengths & length of individual segments & 1m \\
 \hline
 Cylinder masses & masses of individual segments & 0.05kg \\
  \hline
  Cylinder friction & friction of individual segments & 1.0N \\
 \hline
   Spring linear damping & proportion of linear velocity that is lost over time & 0.55 \\
 \hline
    Spring angular damping & proportion of angular velocity that is lost over time  & 0.55 \\
 \hline
     Workspace friction & friction on manipulation surface  & 1.0N \\
 \hline
\end{tabular}
\caption{\textbf{Blender 2.8 cable simulation Hyperparameters}}
\label{table:simulator_params}
\vspace{-0.5cm}
 \end{table}
 
We use an external textured mesh (currently either braided or smooth) that deforms according the movement of rigid-body cylinders. The simulator can be extended to support a new texture provided a mesh.
 
 \subsection{Manipulation}
 \label{sec:sim_manip}
We employ Blender's keyframe system to manipulate cable's for data generation and untangling trials. Given a keyframe at the current object pose (location and orientation) and a keyframe at the desired object pose at a time in the future, Blender will interpolate the pose of the object for all frames between the two keyframes. We use this functionality to expose an API for manipulating the cable in simulation by keyframing its individual cylindrical segments. A unilateral pick-place action (with a fixed vertical offset) is parameterized as follows (Section \ref{sec:ps}): $\mathbf{a_{t}} = (x_{t}, y_{t}, \Delta x_{t}, \Delta y_{t})$. To implement this primitive for an arbitrary $(x_{t}, y_{t}, \Delta x_{t}, \Delta y_{t})$, we perform nearest neighbors search to locate the cylindrical segment nearest $(x_{t}, y_{t})$, and keyframe its updated location  $(x_t + \Delta x_{t}, y_t + \Delta y_{t})$ at time $t+T$, where $T$ is a fixed action duration. To implement a \textbf{Reidemeister move}, we perform unilateral pick-place actions on the left and right endpoint cylinders in sequence, pulling them to opposing, pre-defined locations in the workspace and releasing. For \textbf{Node Deletion} moves, we simultaneously take a pick-place action on a pull cylinder while keyframing the pin cylinder in place. 

We also execute bilateral knot-tying trajectories using keyframing to produce overhand and figure-eight knots. For each knot type, we plan a trajectory for the left and right endpoints by visual inspection to yield the desired knot, as shown for an overhand knot in Figure \ref{fig:overhand_trajectory}. Since the trajectories are deterministic, we implement three randomization techniques to yield perturbed initial configurations both at train and test time: (1) adding random noise to the initial trajectory, (2) lifting one endpoint, forcing gravity to randomly slide a knot lengthwise down the cable, and (3) taking a random pick-place action on the cable.

\begin{figure}[H]
    \centering
    \includegraphics[width=0.8\linewidth]{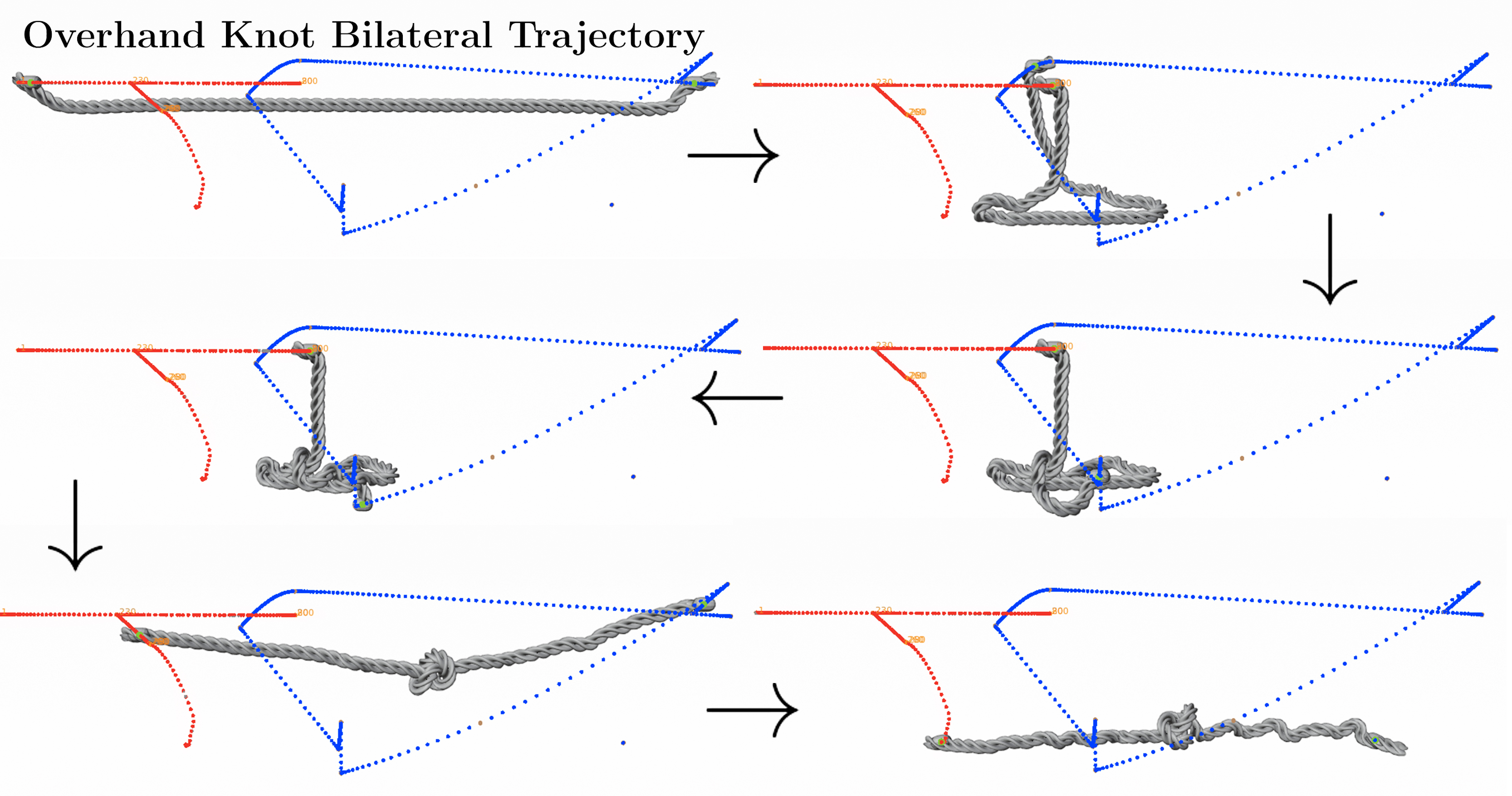}
    \caption{\textbf{Overhand Knot-Tying Trajectory}: We tie an overhand knot by simultaneously keyframing the poses of the left endpoint (red trajectory) and right endpoint (trajectory). The right endpoint is first wrapped around the left endpoint (row 1) and threaded through the resulting loop (row 2). Finally, the endpoints are pulled taught and then released (row 3).
    }
    \label{fig:overhand_trajectory}
\end{figure}

\subsection{Perception}
In this section, we describe the necessary perception components of the simulator used to train HULK. This includes the implementation of BRUCE as an algorithmic supervisor that is used in dataset generation with self-supervised annotation collection.

\subsubsection{BRUCE}

\subsection{Completeness of BRUCE}
In this section, we will provide conditions under which BRUCE is guaranteed to untangle a cable in a semi-planar configuration.

BRUCE takes an input graph $G_0$ and performs a series of $k$ moves resulting in graph $G_k$. Each move $i$ produces intermediate graph $G_i$. 
\\ \textbf{Input}: Graph $G_0 = (E_0, V_0)$ with $N_0$ nodes (crossings).
\\ \textbf{Output}: Graph $G_k = (E_k, V_k)$ with $N_k$ nodes (crossings), where $N_k = 0$. 

We assume the following:
\begin{assumption}\label{assumption:node-reduction}
Each node deletion move reduces $N_i$ by 1.
\end{assumption}
\begin{assumption}\label{assumption:reid-reduction}
Each Reidemeister move reduces $N_i$ by $\geq 0$.
\end{assumption}
\begin{assumption}\label{assumption:consecutive-reid}
It is not possible to take two consecutive Reidemeister moves. 
\end{assumption}
\begin{assumption}\label{assumption:termination}
BRUCE can detect and terminate when $N_k = 0$.
\end{assumption}
The first and second assumptions respectively mean that removing an under-crossing or pulling the ends apart does not result in a more knotted configuration. The third assumption is based on Algorithm \ref{alg:untangle_alg}, in which a Reidemeister move is always either the first move taken or is preceded by a Node Deletion move. 

\begin{lemma}
Let the starting configuration of a cable have $N_0\geq 0$ crossings. If Assumptions~\ref{assumption:node-reduction}-\ref{assumption:termination} hold, then BRUCE is guaranteed to successfully terminate in a state with $N_k=0$ (no crossings) in at most $2N_0$ moves.
\end{lemma}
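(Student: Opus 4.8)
The plan is to treat the number of crossings $N_i$ as a monotone potential and to combine (a) a monotonicity-and-termination argument with (b) a counting argument over the two move types. First I would establish that $N_i$ is non-increasing along the execution: by Assumption~\ref{assumption:reid-reduction} a Reidemeister move never increases the crossing count, and by Assumption~\ref{assumption:node-reduction} a Node Deletion move decreases it by exactly one, so $N_0 \ge N_1 \ge \cdots$ with every Node Deletion step strictly decreasing the potential. Since $N_i$ is a nonnegative integer, these strict decreases can occur at most $N_0$ times, so there are at most $N_0$ Node Deletion moves in total. This also forces termination in the correct state: as long as $N_i > 0$, Algorithm~\ref{alg:untangle_alg} re-enters the loop and performs another Node Deletion move, which cannot happen more than $N_0$ times, so the crossing count must reach $N_k = 0$, at which point Assumption~\ref{assumption:termination} guarantees detection and halting.

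Second, I would bound the total number of moves by counting the two primitives separately. Let $d$ be the number of Node Deletion moves and $r$ the number of Reidemeister moves. The monotonicity argument gives $d \le N_0$. For $r$, I would invoke Assumption~\ref{assumption:consecutive-reid}: since no two Reidemeister moves are consecutive, in the ordered sequence of executed moves every maximal block of Reidemeister moves has length one, so Reidemeister moves must be separated by Node Deletion moves, giving $r \le d + 1$. Combining yields a total of $d + r \le 2d + 1 \le 2N_0 + 1$, and the trailing Reidemeister move taken once $N_k = 0$ (equivalently, the detection step of Assumption~\ref{assumption:termination}) is redundant and need not be counted, producing the stated bound of $2N_0$.

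An alternative and perhaps cleaner packaging is induction on $N_0$. The base case $N_0 = 0$ is immediate from Assumption~\ref{assumption:termination} (zero moves). For the inductive step, one Reidemeister move followed by one Node Deletion move reduces the crossing count by at least one (Assumptions~\ref{assumption:reid-reduction} and~\ref{assumption:node-reduction}), after which the residual configuration has at most $N_0 - 1$ crossings and the inductive hypothesis supplies the remaining bound of $2(N_0 - 1)$, for $2N_0$ total.

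The main obstacle I anticipate is the bookkeeping in the counting step: translating the ``no two consecutive Reidemeister moves'' condition of Assumption~\ref{assumption:consecutive-reid} into the clean inequality $r \le d + 1$, and carefully handling the off-by-one introduced by the initial straightening Reidemeister move in Algorithm~\ref{alg:untangle_alg} so that the final constant is exactly $2N_0$ rather than $2N_0 + 1$. By contrast, the monotonicity and termination pieces are routine consequences of Assumptions~\ref{assumption:node-reduction}, \ref{assumption:reid-reduction}, and~\ref{assumption:termination}.
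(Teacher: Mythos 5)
Your argument is correct and is essentially the paper's own: the paper likewise combines the non-increase of $N_i$ (Assumptions~\ref{assumption:node-reduction} and~\ref{assumption:reid-reduction}) with the no-two-consecutive-Reidemeister condition (Assumption~\ref{assumption:consecutive-reid}) to conclude $N_i < N_{i-2}$, i.e.\ a strict decrease at least every other move, giving termination within $2N_0$ moves. Your explicit $d \le N_0$, $r \le d+1$ bookkeeping and the inductive repackaging are just slightly more verbose renderings of the same idea.
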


\begin{proof}
From the assumptions, it follows that $N_i \leq N_{i-1}$ and $N_i < N_{i-2}$. As $N_i$ cannot increase, and must decrease by 1 at least every other move, BRUCE must terminate with $N_k = 0$ after at most $k = 2N_0$ moves, completing the proof.
\end{proof}

We note that proving similar properties for the practical algorithm HULK is significantly more challenging, as this requires reasoning about the accuracy of the perception systems used to implement the moves and termination condition. Additionally, we note that Reidemeister moves can also reduce the number of nodes in the graph representation, which suggests that a graph-specific upper bound on the runtime might be lower. For instance, a Reidemeister move on a coiled rope will immediately untangle it, as there are no knots in its configuration.



We implement BRUCE in simulation as an algorithmic supervisor for training HULK. 
We assume the full state of the cable is given by $N=50$ cylinders of the cable, and their ordered 3D locations $\{c_i, (x_i, y_i, z_i)\}\vert_{i \in 1 \hdots N}$. This ground truth information is used to implement a heuristic approximation of BRUCE (Algorithm \ref{alg:bruce_sim_alg}) using full knowledge of the cable state and an algorithm to iteratively detect under-crossings (Algorithm \ref{alg:uc_detection}). We locate the first under-crossing by finding a cylinder pair that (1) has a sufficient depth difference, and (2) is separated lengthwise along the planar cable, representing two different pieces of the cable. Given this procedure, we implement a best-approximation of BRUCE:

\begin{algorithm}[H]
\caption{Heuristic BRUCE}
\label{alg:bruce_sim_alg} 
\begin{algorithmic}[1]
\State \textbf{Require:} $\{c_i, (x_i, y_i, z_i)\}\vert_{i \in 1 \hdots N}$
\State Reidemeister move with $c_1$, $c_{N}$ 
\While{\texttt{True}}
\State \texttt{(pull\_idx, pin\_idx)} $\gets$ \textsc{FIND\_UNDER\_CROSSING}($(x_i, y_i, z_i)\}\vert_{i \in 1 \hdots N}$)
\If{\texttt{(pull\_idx, pin\_idx)} $\neq$ \texttt{(None,None)}}
\State $c_\mathrm{pull} = c_{\texttt{pull\_idx}}$
\State $c_\mathrm{pin} = c_{\texttt{pin\_idx}}$
\State Node Deletion move with $c_\mathrm{pull}, c_\mathrm{pin}$
\State Reidemeister move with $c_1, c_{N}$
\Else
\State \texttt{break} \Comment{No under-crossing found, exit}
\EndIf
\EndWhile
\State \textbf{return} DONE
\end{algorithmic}
\end{algorithm}

\begin{algorithm}[H]
\caption{Under-crossing Detection}
\label{alg:uc_detection}
\begin{algorithmic}[1]
\State \textbf{Require:} $\{(x_i, y_i, z_i)\}\vert_{i \in 1 \hdots N}$
\State \LineComment{Initialize experimentally-tuned hyperparameters} 
\State \texttt{DEPTH\_THRESH} $\gets 0.4$
\State \texttt{IDX\_THRESH} $\gets 3$
\State \texttt{PULL\_OFFSET} $\gets 3$
\State \LineComment{\# \# \#}
\For{$i \gets 1$ to $N$} \Comment{For each cylinder, find NN in $x-y$ plane (to detect intersections)}
    \State \texttt{$(x_\mathrm{j}, y_\mathrm{j}) \gets$} 2nd nearest neighbor for $(x_i, y_i)$ in (x,y) projection \Comment{1st match is always $i$ (identical)}
    \State \texttt{depth\_diff} $\gets z_\mathrm{j} - z_i$
    \State \texttt{idx\_diff} $\gets j - i$
    \If{\texttt{depth\_diff} $>$ \texttt{DEPTH\_THRESH} \texttt{AND} \texttt{idx\_diff} $>$ \texttt{IDX\_THRESH}}
    \State \LineComment{Found $c_i, c_j$ (under, over) w/ sufficient depth diff and geodesic separation}
    \State \texttt{pull\_idx} $\gets i + $ \texttt{PULL\_OFFSET} \Comment{Select unoccluded pull point just beyond crossing}
    \State \texttt{pin\_idx} $\gets j$ \Comment{Pin is found match}
    \State \textbf{return} \texttt{(pull\_idx, pin\_idx)} \Comment{Cylinder pair for first under-crossing}
    \EndIf 
\EndFor
\State \textbf{return} \texttt{(None, None)} \Comment{No under-crossing detected}
\end{algorithmic}
\end{algorithm}

\subsubsection{Dataset Generation}
\label{sec:dset_gen}
We use the implementation of Heuristic BRUCE (Algorithm \ref{alg:bruce_sim_alg}) to collect self-supervised annotations for training both the bounding-box knot-detection module and the keypoint regression modules of HULK. The procedure for generating training data is to tie a randomized knot trajectory as described in Section \ref{sec:sim_manip} and execute Heuristic BRUCE to untangle the knot while recording rendered RGB images and their corresponding keypoint annotations and bounding-box annotations. 

The keypoint annotations for $p_l, p_r, p_\mathrm{pull}, p_\mathrm{pin}$ are straightforward to find by projecting the 3D locations of the cylinders $c_1, c_N, c_\mathrm{pull}, c_\mathrm{pin}$ to 2D pixels with the synthetic overhead camera. Collecting annotations of knot bounding boxes is nontrivial since it requires defining a knot in terms of the cylinders. We assume that a knot consists of cylinders in the vicinity of a found under-crossing. We run Algorithm \ref{alg:uc_detection} on $(x_i, y_i, z_i)\}\vert_{i \in 1 \hdots N}$ to first find the pull and pin cylindrical indices \texttt{(pull\_idx, pin\_idx)} for a given cable configuration. We specify the knot with an experimentally tuned buffer of 4 cylinders before and after the pull and pin cylinders. Thus, we consider a knot to be the set of cylinders $\{c_{\texttt{pull\_idx}-4}, \hdots, c_{\texttt{pull\_idx}-1}, c_{\texttt{pull\_idx}}, c_{\texttt{pin\_idx}}, c_{\texttt{pin\_idx}+1}, \hdots, c_{\texttt{pin\_idx}+4} \}$. Finally, we compute the bounding box $x_\mathrm{min}, y_\mathrm{min}, x_\mathrm{max}, y_\mathrm{max}$ in pixel-space, taken over all 2D pixel projections of the knot-containing cylinders. For the multi-knot case, we repeat this annotation process with input $(x_i, y_i, z_i)\}\vert_{i \in \texttt{pin\_idx} \hdots N}$ to Algorithm 3 in order to find the next knot.
 
\section{Details of Learning-Based Methods}
\label{sec:learning_details}
In this section, we outline the implementation and hyperparameters of all the learned components of HULK.
\subsection{Bounding-Box Knot Detection}
We train a bounding box knot-detection module used both for the termination condition of generic HULK (Equation \ref{eq:termination}) and for the local keypoint-regression module of HULK-L (Section \ref{sec:implementation}). The network is based on Matterport's implementation of Mask-RCNN from ~\citet{he2017mask} in Tensorflow and Keras open-sourced on Github \footnote{\url{https://github.com/matterport/Mask_RCNN}}. The implementation uses a ResNet-101 and FPN backbone with the SGD optimizer configured with a learning rate of 0.001, momentum of 0.9, and weight decay of 0.0001. The network is trained on images of $640 \times 480$ RGB images of cables with bounding box annotations recorded according to the procedure described in Section \ref{sec:dset_gen}. Training with 3,500 images for 10 epochs takes approximately 30 minutes on a Nvidia GeForce GTX 1070 GPU. We experimented with the classification threshold at the output and settled on $0.94$, which seemed to suppress false negatives (which are particularly bad for early termination during untangling) without under-predicting bounding boxes. For the network trained on synthetic data, we report a mean average precision (mAP) of 0.925 on the training dataset and 0.916 for a test set of 686 images.

\subsection{HULK-G Keypoint Regression}
We discuss the implementation of HULK-G, which learns to predict the keypoints $p_l, p_r, p_\mathrm{pull}$ and $p_\mathrm{pin}$ from a global RGB image of a cable. HULK-G learns a mapping $f_{\text{all}}: \mathbb{R}^{640 \times 480 \times 3} \mapsto \mathbb{R}^{640 \times 480 \times 4}$ where each of the four channel outputs are a 2D Gaussian centered at $p_l, p_r, p_\mathrm{pull}$ and $p_\mathrm{pin}$ (Fig. \ref{fig:perception_fig}). The network for HULK-G uses a stride-8 ResNet-34 backbone and bilinearly upsamples the output to the original $640 \times 480$ input image size, extended across $4$ channels for the $4$ keypoints. Originally, we experimented with a ResNet-18 backbone which exhibited high variance for the pull and pin keypoints specifically on training images, so we tried a deeper architecture with 34 layers which exhibited better qualitative performance on both training and test images. In practice, we found that a large Gaussian $\sigma$ led to overlapping regions in the pull and pin distributions, preventing fine-grained localization. In contrast, the network became biased to underpredicting all keypoints with a low $\sigma$ since there were very few regions of high probability density in the ground-truth distributions. We train HULK-G with RGB images (per each texture considered) with a Gaussian $\sigma$ of 8 pixels for the ground-truth distributions. The network is trained with a batch size of 4 (for the sake of efficiency) using the Adam optimizer with a learning rate of 0.0001. We found that early stopping around 4 epochs, a train time of ~45 minutes (for 3,500 train images) on an Nvidia Tesla V-100 GPU, prevented overfitting.
\subsection{HULK-L Keypoint Regression}
Since we are using bounding-boxes for HULK's termination condition regardless, a natural extension was to predict the pull and pin keypoints within the bounding box of a knot. HULK-L also uses a different learning strategy that prioritizes local information over HULK-G, motivated by the variance observed in HULK-G for predicting pull and pin points. HULK-L, based on ~\citet{papandreou2017towards}, learns two mappings (1) $f_\mathrm{reid}:  \mathbb{R}^{640 \times 480 \times 3} \mapsto \mathbb{R}^{640 \times 480 \times (3 \times 2)}$ for inferring $p_r,p_l$  from the global image and (2) $f_\mathrm{node}: \mathbb{R}^{80 \times 60 \times 3} \mapsto \mathbb{R}^{80 \times 60 \times (3 \times 2)}$ for inferring $p_\mathrm{pull}, p_\mathrm{pin}$ from a local knot crop of size (80,60) (Fig.~\ref{fig:perception_fig}~D). The output for each network contains 3 channels extended across 2 keypoints; the first channel performs pixelwise binary classification, classifying pixels as 1 if they lie in a disc of radius $R$ containing the desired point, and the remaining 2 channels regress the $x,y$ offsets for each pixel relative to the point of interest. This learning approach is intended to finely localize keypoints by first narrowing the best match candidates to a disc around the desired point and from there find the pixel with the least predicted $x,y$ offsets to the desired point. We use a ResNet-18 backbone for the network with binary cross-entropy loss for the classification heatmap and smooth $L^1$ loss for the offset regression outputs, with a learning rate of 0.0001 trained for 20 epochs (~45 minutes on Nvidia Tesla V-100 GPU). We found experimentally that this combination of hyperparameters yielded the best generalization to unseen test images.

\textbf{HULK-G vs. HULK-L}: In simulated experiments (\ref{sec:experiments}), we did not observe a noticeable advantage to using HULK-L or HULK-G over the other; while HULK-L is able to exploit local information more effectively than HULK-G, HULK-L is also more susceptible to compounding errors over time, since it is conditioned on accurate knot bounding box detection. This becomes increasingly harder as untangling progresses. However, the ability of both HULK-L and HULK-G to achieve untangling suggests that learning task-specific keypoints, globally or locally, is an effective strategy when coupled with geometrically designed manipulation policies.

\section{Experimental Details}
\label{sec:experimental_details}
\subsection{Quantifying Density} 
We first define $E'$ to be every point in the subspace encompassing the physical length and radial thickness of the cable. 
To quantify the density of knotted configurations, we define a looseness measure $L(E^*) \in \mathbb{R}$. $L(E^*)$ quantifies the minimum amount of empty space between adjacent cable segments. We consider point-pairs $(p, p')$ sampled from a subspace $E^*(u, v) \subset \mathbb{R}^3$ of points within 2 cable segments joining 2 vertices $u$ and $v$. $L(E^*)$ is proportional to the maximum Euclidean distance taken over all point pairs sampled from $E^*(u,v)$, minimized over all vertex pairs $(u,v)$ that share 2 edges. For convenience, we denote the points within the 2 edges between $(u,v)$ as $\{e^*_{+}, e^*_{-}\}$ where $e^*_{+}$ corresponds to points within the cable segment corresponding to the edge with annotation $X(u, e_{+}) = +1$ and the opposite notation applies for $e^*_{-}$.
\begin{align}
\label{eq:looseness}
L(E^*) & =  \min_{\{e^*_{+},e^*_{-}\} \subset E^*} \max_{\substack{p \in e^*_{+}, p' \in e^*_{-}}} \frac{ \lVert p - p' \rVert_2 }{\text{cable radial thickness}} 
\end{align}
We define loose configurations as those with $L(E^*) > 0$ and \textbf{dense} configurations as those with $L(E^*) = 0$. A non-zero $L(E^*)$ occurs when the largest point pair $(p, p')$ distance between adjacent edges is non-zero, corresponding to a loose configuration. When $L(E^*) = 0$, at least two edges of the cable are directly adjacent, and segmentation for state estimation cannot reliably isolate overlapping edges. 

\subsection{Physical Experiments}
\subsubsection{Deploying HULK}
We discuss the design choices made when deploying HULK onto the dVRK.
\begin{figure}[H]
    \centering
    \includegraphics[width=1.0\linewidth]{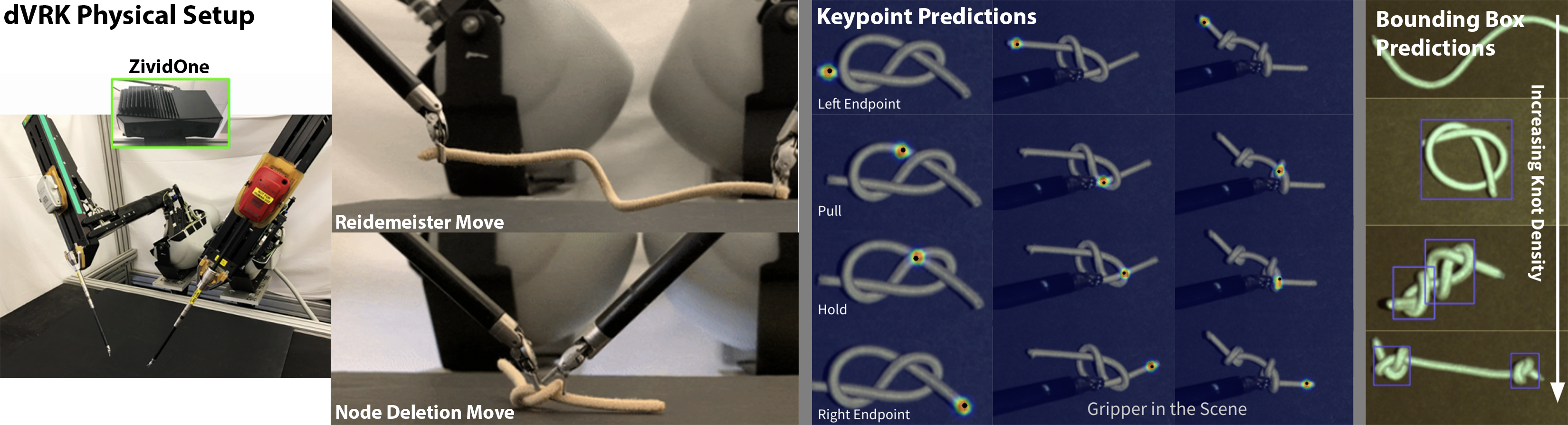}
    \caption{\textbf{HULK-G in Physical Experiments}: We deploy HULK-G onto a physical system using the dVRK robot for bilateral planning, the ZividOne RGB overhead camera, and keypoint and bounding box networks trained from real augmented data.}
    \label{fig:phys_exp_setup}
\end{figure}
\textbf{Manipulation:}
In order to avoid end-effector collisions that can result from purely top-down bilateral grasps, we perform Node Deletion moves with both the pull and pin grippers oriented inwards during the approach, \SI{30}{\degree} from the vertical (Figure  \ref{fig:phys_exp_setup} Node Deletion Move). For Node Deletion moves, the action vector $\vec{\hat{a}}$ is given by
$\hat{p}_\mathrm{pull} - \hat{p}_\mathrm{pin}$. We orient the pin gripper parallel to $\vec{\hat{a}}$ and the pull gripper orthogonal to $\vec{\hat{a}}$. Node Deletion moves are performed sequentially, with the pull grasp followed by the pin grasp in order to avoid shifting a knot during an action. For Reidemeister moves, we employ principal component analysis in a local crop of the hairtie mask around the left and right endpoints to grasp approximately orthogonal to the hairtie at each endpoint, as shown in the Reidemeister move of Figure \ref{fig:phys_exp_setup}. We use a simplified termination condition from Eq. \ref{eq:termination} shown below in Eq. \ref{eq:real_termination} as the stiffness of the hairtie elastic resulted in many false positives with the middle condition.
\begin{equation}
 \label{eq:real_termination}
      \underbrace{g(I) = \varnothing}_{\text{no knot detected}} \text { OR } \underbrace{t > T}_{\text{\# actions exceeded}}= 15 \times \# \text{ initial knots}.
 \end{equation}
\\ \textbf{Perception:} We train HULK's perception components (keypoints and bounding box networks) on real data since the level of supervision required is feasible to obtain and easy to augment. Additionally, this allows us to train on a broader set of cable states that are difficult to simulate due to differences in friction and elasticity between real and synthetic cables. In 350 real hair tie images, we hand-label the keypoints and knot bounding boxes and augment this dataset 20X using the affine and image-based transformations provided in Figure \ref{fig:aug_fig}. We use the networks trained on these datasets to perform Node Deletion and Reidemeister inference. One detail of performing Node Deletion moves sequentially is that we run inference once to predict and grasp the pull keypoint and a second inference with the end-effector in the scene to infer and grasp the pin point. Since planning is done in an open-loop fashion, the sequential inference allows the predicted pin grasp to be updated even if the pull grasp slightly jostles the cable initially. Lastly, before each action, we localize the hairtie within the workspace using color-based thresholding and use this adaptive crop when planning to keep the cable in frame during manipulation.

  \begin{figure}[!ht]
    \centering
    \includegraphics[width=10cm]{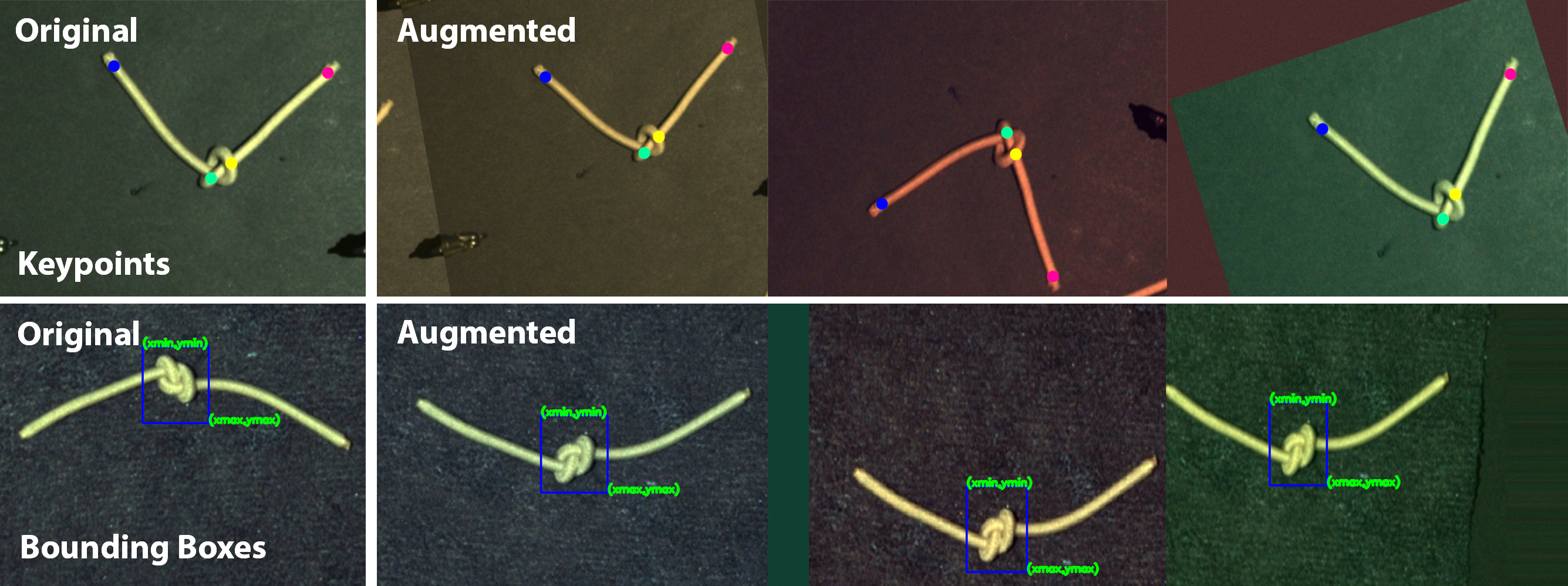}
    \qquad
    \begin{tabular}[b]{cc}\hline
      Augmentation Parameter & Amount \\ \hline
      Linear Contrast & (0.85,1.15) per channel \\
      Add & (-10,10) per channel \\
      Gamma Contrast & (0.9,1.1) \\
      Gaussian Blur & $\sigma=(0.0,0.6)$ \\
      Multiply Saturation & (0.95,1.05) \\
      Additive Gaussian Noise & scale=$(0,0.0125 \times 255)$ \\
      Vertical Flip & $p=0.5$ \\
      Horizontal Flip$^{**}$ & $p_\mathrm{kpt}=0.0, p_\mathrm{bbox}=0.5$ \\
      Scale & (0.8,1.2) \\
      Translate \% & (-20, 20) \\
      Rotate & (\SI{-20}{\degree}, \SI{20}{\degree}) \\
      Shear & (\SI{-10}{\degree}, \SI{10}{\degree}) 
      \\ \hline
    \end{tabular}
    \caption{\textbf{HULK-G Real Data Augmentations}: We augment a small set of image, annotation pairs for HULK-G's bounding box and keypoint modules using the affine and image transformations above. For keypoint augmentations, we disregard horizontal flips since $p_\mathrm{pull}, p_\mathrm{pin}$ are not recoverable under such transformations since we always perform untangling relative to the right endpoint. Some examples of augmented images and annotations are shown above.}
  \label{fig:aug_fig}
  \end{figure}
 
\subsubsection{Manipulation Details}
\begin{figure}[H]
    \centering
    \includegraphics[width=1.0\linewidth]{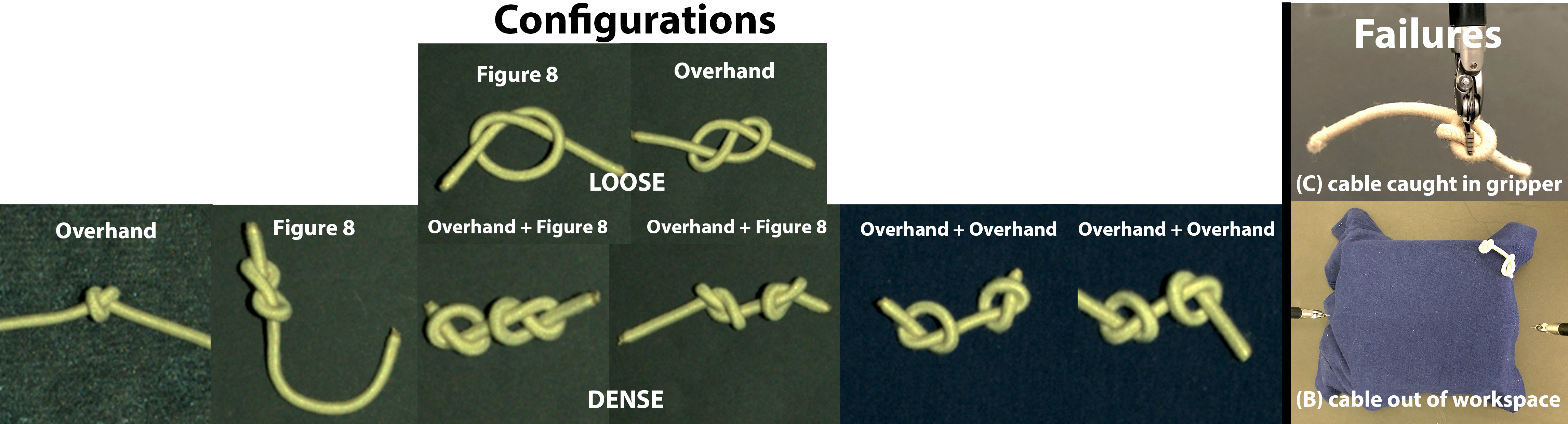}
    \caption{\textbf{Physical Experiment Configurations and Failure Modes}: We visualize loose and dense variants of Overhand and Figure-8 Knots used in physical experiments. Two common failure modes include the hair tie being caught in the dVRK end effector (Failure Mode C) due to cable friction/aggressive grasp planning and the hair tie leaving the reachable workspace due to consecutive poor predicted actions. }
    \label{fig:phys_exp_setup}
\end{figure}

 \begin{table}[!htbp]
\centering {
\begin{tabular}{m{0.25\linewidth}| m{0.75\linewidth}}
\textbf{Challenge} & \textbf{Workaround} \\ 
\hline
Pin/pull gripper collision & Purposely bias pin/pull keypoints apart during real data annotation, plan pin/pull grasp approaches at \SI{30}\degree from vertical to avoid collision \\
\hline
Cable out of workspace & Employ a third pick-place action (Recenter Move) to grasp at the detected rope mask center and drop at the global workspace center if the cable mask lies in an unreachable region \\
\hline 
Near misses due to knot density & Intentionally overextend the pull gripper 2mm lower than predicted pull point for more secure grasp 
 \end{tabular}}
 \caption{\textbf{Considerations for Physical Manipulation}: We elaborate on several strategies employed to mitigate the challenges of open-loop control in the setting of physical untangling with the dVRK. }
 \label{table:manip_failure_modes}
 \end{table}

\subsection{Simulation Experiments}
\subsubsection{Failure Modes}
We summarize the occurrence of failure modes across each experimental policy as discussed in Section{~\ref{sec:experiments}} in Table~\ref{table:failures}. The two prominent failure modes we observe in the simulated trials are bounding box false negatives and premature termination (Equation \ref{eq:termination}), corresponding to Failure Mode A. It is possible to recover from bounding box false negatives which delay untangling and, in the worst case, exceed the maximum allowed action count. On the other hand, premature termination immediately causes a failure to untangle the cable.  
\begin{table}[H]
\centering
 \begin{tabular}{||c || c || r || } 
 \hline
 Policy & Explanation & Count \\ 
 \hline\hline
Depth & bounding box false negative & 3 \\
\hline
Depth & premature termination (Equation \ref{eq:termination}) & 3 \\
\hline\hline
Random & bounding box false negative & 11 \\
\hline
Random & premature termination (Equation \ref{eq:termination}) & 47 \\
\hline\hline
HULK-G & bounding box false negative & 3 \\
\hline
HULK-G & premature termination (Equation \ref{eq:termination}) & 0 \\
\hline\hline
HULK-L & bounding box false negative & 1 \\
\hline
HULK-L & premature termination (Equation \ref{eq:termination}) & 5 \\
\hline
\end{tabular}
\caption{\textbf{Classification of Untangling Failure Causes in Simulator Experiments}}
\label{table:failures}
 \end{table}
 
The trials suggest that the Random policy experiences the highest occurance of failures both in bounding box false negatives and early triggers. This is likely because taking random actions produces particularly pathological configurations of the cable that are never seen during training of the bounding box module. HULK-L experiences the next most early triggers, likely caused by mispredicted bounding boxes that lead to erroneous pull and pin predictions that have a direct impact on the termination condition (Equation \ref{eq:termination}). Depth, HULK-G, and HULK-L all experience a comparable number of failures overall, but HULK-L and HULK-G show performance gains with empirically fewer actions. The occurrence of fewer false negatives and early triggers (compared to Random) indicates that HULK takes more sensible untangling actions that result in intermediate configurations closer to those seen in training the bounding box detector. These configurations more closely align with those produced by the algorithmic supervisor BRUCE. 
\end{document}